\documentclass{article}

\usepackage{microtype}
\usepackage{graphicx}
\usepackage{booktabs} 
\usepackage{subcaption}
\usepackage{hyperref}
\usepackage{multirow}
\usepackage{float} 

\usepackage[accepted]{icml2025}
\newcommand{\bandwidth}{L}
\newcommand{\reg}{\lambda}
\newcommand{\round}[1]{\left(#1\right)}
\newcommand{\Real}{\mathbb{R}}

\usepackage{amsmath}
\usepackage{amssymb}
\usepackage{mathtools}
\usepackage{amsthm}
\usepackage{amsfonts} 
  
\renewcommand{\P}{\ensuremath{\mathbb{P}}} 

\usepackage[capitalize,noabbrev]{cleveref}

\theoremstyle{plain}
\newtheorem{theorem}{Theorem}[section]
\newtheorem{proposition}[theorem]{Proposition}

\theoremstyle{definition}

\theoremstyle{remark}

\usepackage[textsize=tiny]{todonotes}

\newcommand{\bX}{ \mbox{\bf X}}

\newcommand{\bZ}{ \mbox{\bf Z}}

\newcommand{\bY}{ \mbox{\bf Y}}

\newcommand{\bM}{ \mbox{\bf M}}

\newcommand{\argmin}{{\mathop{\rm arg\, min}}}

\newcommand{\beq}{ \begin{equation}}
\newcommand{\eeq}{ \end{equation}}
\newcommand{\beqn}{ \begin{eqnarray}}
\newcommand{\eeqn}{ \end{eqnarray}}

\usepackage{caption}
\renewcommand{\arraystretch}{1.5}

\icmltitlerunning{Doubly Adaptive Neighborhood Conformal Estimation}

\begin{document}

\twocolumn[
\icmltitle{DANCE: Doubly Adaptive Neighborhood Conformal Estimation}




\begin{icmlauthorlist}
\icmlauthor{Brandon R. Feng}{NCSU,AMZN1}
\icmlauthor{Brian J. Reich}{NCSU}
\icmlauthor{Daniel Beaglehole}{UCSD}
\icmlauthor{Xihaier Luo}{BKHVN}
\icmlauthor{David Keetae Park}{BKHVN}
\icmlauthor{Shinjae Yoo}{BKHVN}
\icmlauthor{Zhechao Huang}{AMZN1}
\icmlauthor{Xueyu Mao}{AMZN1}
\icmlauthor{Olcay Boz}{AMZN2}
\icmlauthor{Jungeum Kim}{NCSU}
\end{icmlauthorlist}

\icmlaffiliation{NCSU}{Department of Statistics, North Carolina State University, Raleigh, NC, USA}
\icmlaffiliation{AMZN1}{Amazon.com, Seattle, WA, USA}
\icmlaffiliation{AMZN2}{Amazon.com, San Diego, WA, USA}
\icmlaffiliation{UCSD}{Department of Computer Science, University of California San Diego, San Diego, CA, USA}
\icmlaffiliation{BKHVN}{Computational Science Initiative, Brookhaven National Laboratory, Upton, NY, USA}

\icmlcorrespondingauthor{Jungeum Kim}{jkim255@ncsu.edu}


\vskip 0.3in
]



\printAffiliationsAndNotice 

\begin{abstract}
The recent developments of complex deep learning models have led to unprecedented ability to accurately predict across multiple data representation types. Conformal prediction for uncertainty quantification of these models has risen in popularity, providing adaptive, statistically-valid prediction sets. For classification tasks, conformal methods have typically focused on utilizing logit scores. For pre-trained models, however, this can result in inefficient, overly conservative set sizes when not calibrated towards the target task. We propose DANCE, a doubly locally adaptive nearest-neighbor based conformal algorithm combining two novel nonconformity scores directly using the data's embedded representation. DANCE first fits a task-adaptive kernel regression model from the embedding layer before using the learned kernel space to produce the final prediction sets for uncertainty quantification. We test against state-of-the-art local, task-adapted and zero-shot conformal baselines, demonstrating DANCE's superior blend of set size efficiency and robustness across various datasets.
\end{abstract}

\section{Introduction}

The stunning performance of large, pre-trained, multi-modal models, together with the excitement it has sparked, continues to fuel its quick adoption across many sectors of society. More specifically, vision-language models (VLMs) \cite{radford2021learning} have spread in use to diverse areas such as medical tasks \cite{bazi2023vision, luo2025vividmed, moon2022multi}, remote sensing \cite{hu2025rsgpt, zhang2024earthgpt, kuckreja2024geochat}, and misinformation detection \cite{cekinel2025multimodal, zhou2025m2, tahmasebi2024multimodal}. Yet despite these benefits, relatively few people view AI systems as fully trustworthy, in part because they often function as \textit{black boxes}. This gap between perceived unreliability and expected utility creates a persistent tension in widespread adoption. Uncertainty quantification of these black box outputs is a practical way to ease this tension: by explicitly acknowledging uncertainty in model responses and providing probabilistic assessments of output reliability.

To make black-box VLM predictions usable in practice, we need an uncertainty interface that (i) does \textit{not} require access to or retraining of the foundation model; and (ii) is statistically meaningful at the user’s chosen error tolerance. 
Conformal prediction (CP) provides a principled, model-agnostic approach for uncertainty quantification in this black-box setting: given a user-specified miscoverage level $\alpha$, CP wraps any predictor to produce a prediction set that contains the true label with probability at least $1-\alpha$~\cite{vovk2005algorithmic, vovk1999machine, lei2018distribution, shafer2008tutorial}. Additionally, many uncertainty quantification approaches require modeling assumptions about the data-generating distribution \cite{graves2011practical,gal2016dropout,hernandez2015probabilistic}. In contrast, conformal prediction is distribution-free, making it easy to deploy across a broad range of problems. The key assumption is exchangeability between the calibration dataset and the test dataset, where neither is involved in the training of the model. The prediction set for each test observation is then formed from appropriate calibration set responses based on some nonconformity score. The ease of usage has turned CP into a popular UQ tool for VLMs \cite{kostumov2024uncertainty, dutta2023estimating, silva2025conformal}.

\begin{figure*}[!t]
  \centering
  \includegraphics[width=\textwidth]{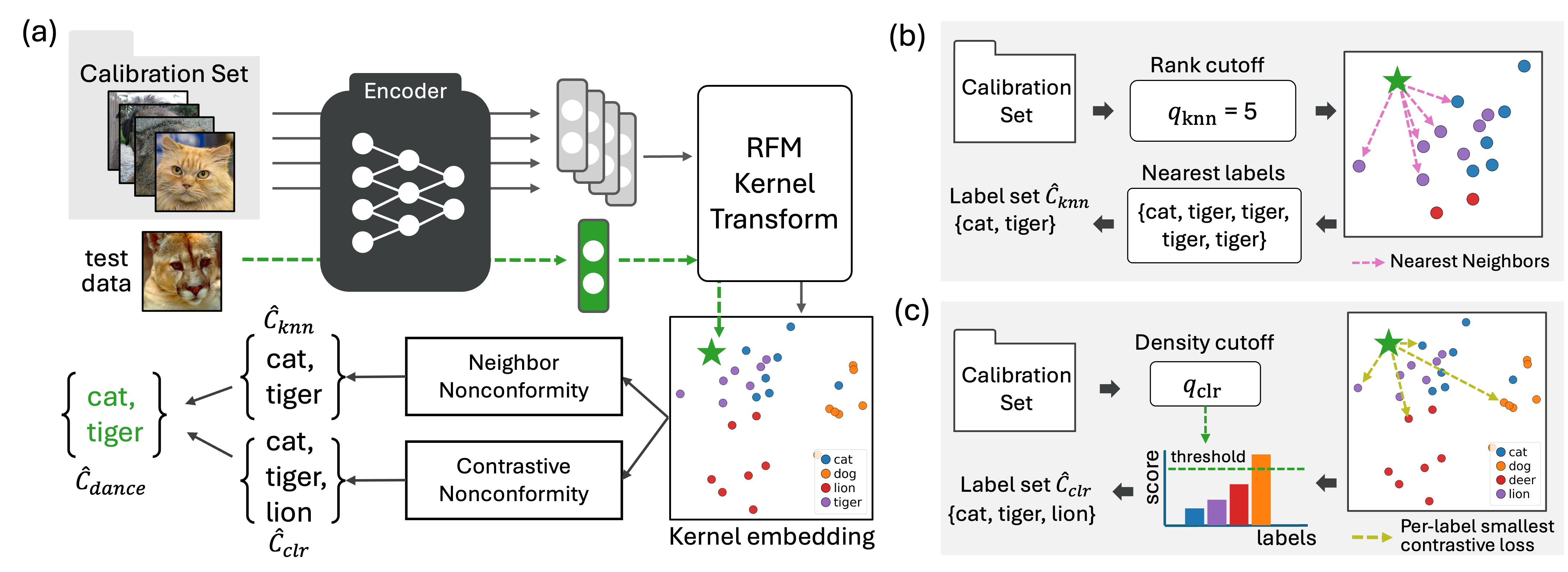}
  \caption{The DANCE pipeline. (a) The calibration set is used to compute nonconformity scores for a given test input. Our architectural contribution is the introduction of a task-adapted Recursive Feature Machines (RFM) kernel together with two complementary nonconformity scores. (b) The Neighbor Nonconformity module derives a rank cutoff $q_{\mathrm{knn}}$ in the kernel space from the calibration set. This cutoff is used to identify nearest labels and to construct the candidate label set $\hat{C}_{\mathrm{knn}}$ for the test input. (c) The Contrastive Nonconformity module defines a density cutoff $q_{\mathrm{clr}}$, below which a second candidate label set $\hat{C}_{\mathrm{clr}}$ is formed. The intersection of $\hat{C}_{\mathrm{knn}}$ and $\hat{C}_{\mathrm{clr}}$ yields the final conformal prediction set $\hat{C}_{\mathrm{dance}}$, yielding $1-\alpha$ coverage.}
  \label{fig:overview}
\end{figure*}

Yet CP’s distribution-free guarantee often comes with a cost: the resulting prediction sets can be unnecessarily large, especially when the base model’s confidence scores are misaligned with the downstream task~\cite{lei2018distribution,romano2019conformalized,romano2020classification,angelopoulos2020uncertainty}. For example, consider a simple three-way problem, cat vs. fox vs. dog, handled by an out-of-the-box VLM. If the model assigns similar scores to these semantically related classes, a conformal method may hedge by returning \{cat, fox, dog\} to maintain the desired $1-\alpha$ coverage: \textit{statistically valid, but operationally uninformative}.



This has motivated extensive efforts to improve efficiency. In classification, there has been significant focus on set size reduction through areas such as optimal thresholding from LAC \cite{sadinle2019least}, to novel nonconformity measures such as APS, RAPS and SAPS \cite{romano2020classification,angelopoulos2020uncertainty, huang2023conformal}, and optimal transport weighting \cite{silva2025conformal} to reduce out-of-distribution misalignment for zero-shot prediction. However, these methods utilize the logit probability layer of the model, losing potential impact from rich embedding information from earlier layers. 

Local variants of CP have offered a framework-based approach to set size reduction. These methods focus on the notion of local exchangeability, where calibration observations in a local space around the test point are favored in prediction set construction, resulting in smaller set sizes due to smaller response variation \cite{ding2023class, guan2023localized, mao2024valid, hore2025conformal}. For example, in a spatial regression setting, only calibration observations in a 2-D or 3-D radius around the test are used based on a decaying correlation parameter \cite{mao2024valid, jiang2024spatial, feng2025staci}. In classification, approaches such as Deep $k$-NN \cite{papernot2018deep} and CONFINE \cite{huang2024confine} introduce nonconformity scores based directly on the neighborhood label distribution. Both utilize cosine distance for neighbor retrieval. However, cosine distance is inherently linear, whereas the complex task-specific relationships may be better described in a non-linear kernel space. Meanwhile, approaches such as localized CP \cite{guan2023localized} and Neighborhood CP \cite{ghosh2023improving} re-weight existing nonconformity scores. A potential research gap of these methods is they use neighbor information only \emph{indirectly} as a scalar weight, rather than explicitly using neighbor distance in the nonconformity metric.

We propose the DANCE framework, combining two \emph{novel} locally adaptive nonconformity scores based on embedding distance in a fitted kernel-space to provide valid prediction sets for classification. DANCE contributes:
\vspace{-0.5em}
\begin{itemize}
\setlength\itemsep{0em}
    \item Computationally efficient alignment of the VLM image embedding space to learn a task-adapted kernel space
    
    \item Introduction of two nonconformity scores \textit{directly} using neighborhood information. DANCE utilizes one \textit{rank-based} neighborhood score and one \textit{density-based} contrastive score defined within the learned kernel space.
\end{itemize}

Rather than indirectly incorporating the kernel similarity as weights for existing scores, DANCE directly uses the kernel neighborhoods to calculate nonconformity. By constructing the nonconformity scores from embeddings adapted in the kernel space, DANCE \textit{maximizes} the utility of rich representations provided by pre-trained foundation models. This dual design blends set size efficiency for marginal coverage provided by the rank-based score with robustness for conditional coverage provided by the density-based score, yielding compact and reliable prediction sets.

\section{Preliminaries}

\textbf{Problem Statement.} Consider an image classification setting with images $X \in \mathcal{X}$ and labels $y \in \mathcal{Y} = \{1,...,c\}$. We use a fixed, pre-trained image encoder (e.g. CLIP \cite{radford2021learning}), $\phi:  \mathcal{X} \to \mathbb{R}^{d}$, mapping images to a $d$-dimensional embedding space. We are interested in constructing a prediction \emph{set}, $\hat{C}(.)\subseteq \mathcal{Y}$, that ensures marginal coverage at a user-specified error rate $\alpha \in (0,1)$ such that for test observation $(X^*, y^*)$, 
$$P(y^*\in\hat{C}(\phi(X^*))) \geq 1-\alpha.$$


\begin{figure*}[!t]
  \centering

  \begin{subfigure}[t]{0.49\textwidth}
    \centering
    \includegraphics[width=\textwidth]{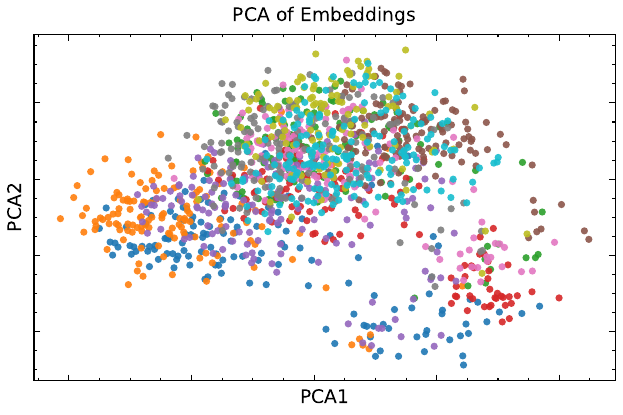}
    \caption{PCA of the original embedding space (top 10 classes; PC1 vs. PC2).}
    \label{fig:PCA1}
  \end{subfigure}
  \hfill
  \begin{subfigure}[t]{0.49\textwidth}
    \centering
    \includegraphics[width=\textwidth]{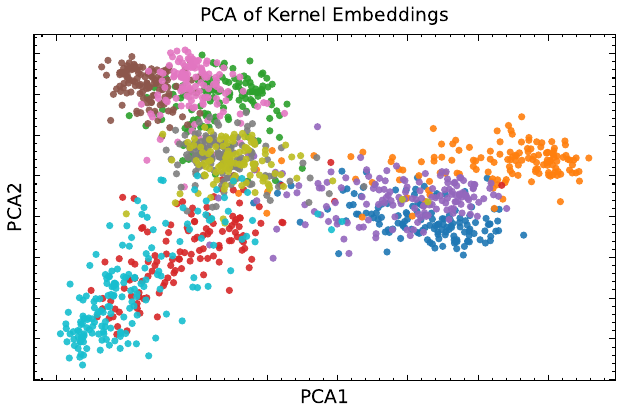}
    \caption{PCA of the kernel-transformed embedding space (top 10 classes; PC1 vs. PC2)}
    \label{fig:PCA2}
  \end{subfigure}

  \caption{Comparison of embedding projections for original and RFM kernel-transformed space for the top 10 most frequent classes (represented by colors) of the Imagenet-R dataset. Note the per-class clustering of the transformed kernel space. (a) Original embeddings; (b) Kernel-transformed embeddings.}
  \label{fig:Kernel_PCA}
\end{figure*}

\subsection{Split Conformal Prediction}

Conformal inference \citep{vovk2005algorithmic, shafer2008tutorial} allows us to construct prediction sets for black-box models with a finite-sample coverage guarantee. Consider calibration data (observations) $\mathcal{D}_{\rm cal} =\{(X_1,y_1),...,(X_n,y_n)\}$, where each $X_i \in \mathcal{X}$ is a feature and $y_i \in \mathcal{Y}$ is a response. For a new, unseen test point $(X_{n+1}, y_{n+1})$, we assume it is exchangeable with $\mathcal{D}_{\rm cal}$, i.e., their joint distribution is invariant under permutation. To apply split conformal prediction, we need a user-specified nonconformity score $S$ measuring the discrepancy between $(X_{n+1}, y_{n+1})$ and $\mathcal{D}_{\rm cal}$. Given a pre-specified confidence level $1-\alpha$, a nonconformity threshold, $q$, is calculated as the $\lceil (1-\alpha)(1+n) \rceil^{\text{th}}$ smallest score of $\{S_i\}_{i\in \mathcal{D}_{\rm cal}}$, where $S_i = S(X_i, y_i)$. Given a test observation $X_{n+1}$ and calibration set $\mathcal{D}_{\rm cal}$, the prediction set for $y_{n+1}$  is defined as $\hat{C}(X_{n+1}) = \{y\in \mathcal{Y} \mid S(X,y) \leq q\}.$ Assuming exchangeability holds between the test and calibration set, this prediction set guarantees $\mathbb{P}(y_{n+1} \in \hat{C}(X_{n+1})) \ge 1-\alpha$ \citep{lei2018distribution}. 



\subsection{Local Conformal Prediction}

 The framework of localized conformal prediction was introduced by \citet{guan2023localized}, creating more efficient prediction set sizes through weighting the nonconformity scores according to some localizer function centered at test observation $X_{n+1}$. Broadly, for localizer similarity $ K_{n+1,i} = K(X_{n+1}, X_i) \, \forall i\in1,...,n$, calculate probability mass $p_{n+1,i} = K_{n+1,i}/\sum_{j=1}^{n+1}K_{n+1,j}$ to weight scores $s_i$ before finding the quantile cutoff in SCP. For calibration neighbors with high dissimilarity, their influence in the nonconformity threshold calibration decreases significantly.

\citet{ghosh2023improving} proposed using an exponential kernel as the localizer such that for some function, $\phi(.)$ of the features,
$$K(X_i, X_j) = \exp \left( \frac{-||\phi(X_i) - \phi(X_j)||_2}{\bandwidth} \right),$$
for a tunable bandwidth parameter $\bandwidth$. They show this kernel localizer raises the set size efficiency for both regression and classification problems. For classification, intuitively, this local weighting ensures the threshold is primarily determined by neighbors with similar classes to the given test image, reducing the addition of unrelated classes to the prediction set. This idea of localized exchangeability raising set-size efficiency serves as a basis for our methodology.

\subsection{Recursive Feature Machines}

 Our method requires learning representative nearest neighbors for a given task, rather than using pre-trained similarities for identification. This is achieved using the Recursive Feature Machine (RFM) model \cite{radhakrishnan2024mechanism}, an efficient feature-learning method that composes a learned feature importance matrix and a kernel machine, as an adapter for the image embedding space of the VLM. 

We define notation for Kernel Ridge Regression (KRR) and RFM below. Define one-hot encoded training responses $\bY \in \mathbb{R}^{n \times c}$ where $n$ is the number of training observations and $c$ is the number of classes. Let input embeddings be $\bZ = \phi( \bX) \in \mathbb{R}^{n \times d}$ for the training image set, $\bX$, and test input embedding be $Z_{n+1} = \phi( X_{n+1}) \in \mathbb{R}^{1 \times d}$. The RFM is a learned kernel ridge regression defined as $$f(Z_{n+1}) = K_M(Z_{n+1},\bZ)\boldsymbol{\beta},$$ for a set of coefficients $\boldsymbol{\beta} \in \Real^{n \times c}$, obtained through KRR on the learned similarity measure $K_M$ from RFM. In this work, we define $K_M$ to be a generalized Laplace kernel function $$K_{M}(Z_1, Z_2) = \exp\round{-\round{\frac{\|Z_1-Z_2\|_M}{\bandwidth}}^{1/\xi}}~,$$ for bandwidth parameter $\bandwidth$, shape factor $\xi$, and feature importance matrix $\bM \in \mathbb{R}^{d \times d}$. Here, $\|z\|_M = (z \bM z^\top)^{1/2}$ indicates the Mahalanobis norm of $z$. The $\bM$ matrix learned by RFM is computed as the Average Gradient Outer Product (AGOP) of a KRR model, capturing the critical features in input space that most influence the prediction function and improving downstream prediction performance both provably and empirically \citep{zhu2025iteratively, beaglehole2025xrfm, radhakrishnan2025linear}. This importance matrix separates RFM from other KRR methods, providing an implicit measure of feature importance.

RFM alternates between updates of $\bM$ and $\boldsymbol{\beta}$, thus emphasizing embedding dimensions based on their utility for the prediction task. We additionally tune the $\bandwidth, \xi$ parameters on a validation set to optimize the kernel used to obtain nearest neighbors in DANCE. As we see in Figure \ref{fig:Kernel_PCA}, kernel PCA on the RFM kernel embedding space reveals distinct class-wise clustering. Additional details on AGOP computation, KRR and RFM are in Appendix \ref{app: krr}.

\section{Doubly Adaptive Neighborhood Conformal Estimation}

 The proposed conformal algorithm, DANCE, consists of a mixture of two novel nonconformity scores, \emph{directly} utilizing neighborhood information within a learned kernel space: a rank-based score (derived from neighborhood size) and a density-based score (derived from nearest-neighbor contrastive loss). 

Our design is motivated by a trade-off between \textit{efficiency} and \textit{robustness}. 
\vspace{-0.5em}
\begin{itemize}
\setlength\itemsep{0em}
    \item The \textbf{Rank-Based Score} ($S_{knn}$) relies on the relative ordering of neighbors. It is scale invariant, adapting to variations in embedding density by treating neighbors in sparse and dense regions equally. This adaptivity results in the global threshold producing highly size \textit{efficient} prediction sets. However, the performance may depend on the choice of distance metric, e.g., in this context, the kernel fit quality. In sparse regions, distant noisy neighbors can be treated as relevant for threshold determination.
    \item The \textbf{Density-Based Score} ($S_{clr}$) relies on contrastive similarity of neighbors. It is more \textit{robust}, aggregating embedding similarity in a region to identify all plausible classes, thus being able to better reach rare ones. However, this sensitivity to region density can yield larger set sizes, making it more conservative and less size efficient. On the other hand, the larger set size can increase conditional class-wise robustness.
\end{itemize}

In what follows, given pre-trained embedding function, $\phi$, let the embedded representation of an input image $X\in \mathcal{X}$ be denoted as $Z =\phi(X)$ and the corresponding labels be denoted $y \in \mathcal{Y}$ where $\mathcal{Y}$ is the set of all labels. Let the neighbor reference set be defined $\mathcal{D}_{\rm ref}=\{(\tilde{Z}_j,\tilde{y}_j)\}_{j=1}^{n_{\rm ref}}$ and the calibration set be defined $\mathcal{D}_{\rm cal}=\{(Z_j,y_j)\}_{j=1}^{n}$ where $\mathcal{D}_{\rm ref}$ is independent from $\mathcal{D}_{\rm cal}$. Define notation $\mathcal{N}_k(Z, \tilde{\mathbf{Z}}) \subseteq \mathcal{D}_{\rm ref}$ as the $k$-nearest neighbors to $Z$ from reference embeddings, $\tilde{\mathbf{Z}}$, in the learned RFM kernel space, $K_M(.,.)$. Nearest neighbors are identified as the $k$ reference set observations with minimum pairwise Euclidean distances after embeddings are projected by the learned feature matrix, $\bM^{1/2}$.





\subsection{$k$-NN Set Nonconformity}

We first define a prediction set (hereafter, the $k$-NN set classifier) that consists of \emph{all} unique labels appearing among the $k$ nearest neighbors i.e, 
\begin{equation}\label{eq:knn_C}
\hat{{L}}_k(Z) = {\rm Unique}\{\tilde{y}\in \mathcal{Y}\mid (\tilde{Z},\tilde{y}) \in \mathcal{N}_k(Z,\tilde{\textbf{Z}})\}.
\end{equation}
 The philosophy of the $k$-NN set classifier is based on the same premise of the standard $k$-NN classifier that the true label for $Z$ is likely to appear among the labels of its $k$-nearest neighbors. In the following, we conformalize the $k$-NN set, establishing its finite-set distribution-free theoretical coverage guarantee.

For feature-label pairings $(Z,y)$, and neighbor search number $m_{knn} \leq n_{\rm ref}$, we define the rank-based nonconformity score as 
\begin{equation}\label{eq:global_scoring}
    S_{knn}(Z,y) = \underset{k \in [m_{knn}] \cup \{\infty\}}{\argmin}\big\{y\in \hat{L}_{k}(Z)\big\},
\end{equation}
 where $\hat{L}_{\infty}(Z)$ is the set of all labels. Define the conformal threshold $q_{knn}$ by taking the $\lceil (1-\alpha_{knn})(1+n) \rceil^{\text{th}}$ smallest calibration score on $\mathcal{D}_{\rm cal}$ for some error rate $\alpha_{knn}$. The $k$-NN conformal prediction set is defined by the set of labels found within neighbor radius $q_{knn}$, i.e, 
\begin{equation}\label{eq:knn_pred_C}
  \hat{C}_{knn}(Z) = \hat{L}_{q_{knn}}(Z),  
\end{equation}
where $\hat{L}_k$ is defined in \eqref{eq:knn_C}. This conformalized prediction set comes with coverage $1-\alpha_{knn}$ as the following theorem.

\begin{theorem}\label{thm:knn_coverage}
Assume that the calibration sample $\mathcal{D}_{\rm cal}$ together with test observation $(Z_{n+1},y_{n+1})$
is exchangeable. For the prediction set $\hat{C}_{knn}(Z_{n+1})$ defined in \eqref{eq:knn_pred_C}, we have coverage guarantee 
\[  1-\alpha_{knn}\leq \mathbb{P}\left(y_{n+1}\in \hat{C}_{knn}(Z_{n+1}) \right)\leq 1-\alpha_{knn}+\frac{1}{n+1},\]
where the upper bound is established given there is no tie among the nonconformity scores and $q_{knn}\leq m_{knn}$.
\end{theorem}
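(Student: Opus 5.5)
The plan is to reduce the claim to the standard split-conformal quantile argument. First, I would condition on the reference set $\mathcal{D}_{\rm ref}$, which is independent of $\mathcal{D}_{\rm cal}\cup\{(Z_{n+1},y_{n+1})\}$. Once $\mathcal{D}_{\rm ref}$ is fixed, the score $S_{knn}(\cdot,\cdot)$ in \eqref{eq:global_scoring} is a fixed deterministic function: the learned metric $\bM$, the neighborhoods $\mathcal{N}_k$ and $m_{knn}$ do not depend on calibration or test data. It follows that the scores $S_1,\dots,S_n,S_{n+1}$, with $S_i=S_{knn}(Z_i,y_i)$, are exchangeable conditionally on $\mathcal{D}_{\rm ref}$.

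The key structural step is that the $k$-NN sets are nested in $k$. Indeed, $\mathcal{N}_k(Z,\tilde{\mathbf{Z}})\subseteq\mathcal{N}_{k+1}(Z,\tilde{\mathbf{Z}})$ under a fixed tie-breaking rule, so $\hat L_k(Z)\subseteq \hat L_{k+1}(Z)\subseteq \hat L_\infty(Z)=\mathcal{Y}$. Consequently, for every integer $q$ (and for $q=\infty$),
\[ y\in \hat L_{q}(Z) \iff S_{knn}(Z,y)\le q. \]
The forward direction holds by the definition of the argmin. For the reverse direction, if $S_{knn}(Z,y)=k\le q$ then $y\in\hat L_k(Z)\subseteq\hat L_q(Z)$. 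Hence $\hat C_{knn}(Z_{n+1})=\{y: S_{knn}(Z_{n+1},y)\le q_{knn}\}$, which is exactly the standard split-conformal set. I expect this identification to be the main point requiring care. Two details matter: fixing a deterministic tie-breaking in the neighbor ordering so that nestedness holds, and handling the case $q_{knn}=\infty$ (when $\lceil(1-\alpha)(n+1)\rceil>n$, or when the empirical quantile is $\infty$). In that case the set is $\mathcal{Y}$ and coverage is trivial.

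With this equivalence, $\mathbb{P}(y_{n+1}\in\hat C_{knn})=\mathbb{P}(S_{n+1}\le q_{knn})$, where $q_{knn}$ is the $\lceil(1-\alpha_{knn})(n+1)\rceil$-th smallest of $S_1,\dots,S_n$. By the usual argument, $S_{n+1}\le q_{knn}$ holds iff the rank of $S_{n+1}$ among $S_1,\dots,S_{n+1}$ is at most $\lceil(1-\alpha_{knn})(n+1)\rceil$. By exchangeability, that rank is uniform on $\{1,\dots,n+1\}$ when there are no ties, and stochastically no larger when ties are broken favorably. This gives the lower bound $\lceil(1-\alpha_{knn})(n+1)\rceil/(n+1)\ge 1-\alpha_{knn}$. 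Taking expectation over $\mathcal{D}_{\rm ref}$ preserves the bound.

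For the upper bound, I would assume no ties and $q_{knn}\le m_{knn}$, so that $q_{knn}$ is a finite calibration score. The event $S_{n+1}\le q_{knn}$ then occurs exactly when the rank of $S_{n+1}$ is at most $\lceil(1-\alpha_{knn})(n+1)\rceil$, with probability $\lceil(1-\alpha_{knn})(n+1)\rceil/(n+1)<1-\alpha_{knn}+1/(n+1)$. I would remark that because the scores are integer-valued, the no-tie hypothesis is strong. The theorem simply assumes it, as is standard (e.g.\ \citet{lei2018distribution}), so no randomization is needed.
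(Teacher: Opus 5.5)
Your proposal is correct and follows essentially the same route as the paper. The independent reference set makes $S_{knn}$ a fixed score function, so the calibration and test scores are exchangeable. The nestedness $\hat L_k(Z)\subseteq \hat L_{k+1}(Z)$ then identifies $\{y_{n+1}\in\hat L_{q_{knn}}(Z_{n+1})\}$ with $\{S_{knn}(Z_{n+1},y_{n+1})\le q_{knn}\}$, and the standard split-conformal rank argument gives both bounds.

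Your explicit handling of two points — deterministic tie-breaking among neighbors, which guarantees nestedness, and the case $q_{knn}=\infty$ — fills in details the paper leaves implicit.
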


\proof See, Appendix Section \ref{sec:knn_proof}

As the $S_{knn}$ score values are discrete, this can result in ties on the calibration set. In this case, the upper bound guarantee does not exist, which can lead to overcoverage, decreasing size efficiency. In order to have the upper bound guarantee as well, we add a small amount of noise $S_{knn}(Z,y) + U(Z,y)$ for $U \stackrel{iid}{\sim} \text{Unif}(0,\epsilon)$ for some $\epsilon < 1$.

The threshold $q_{knn}$ represents the maximum rank a label may take in a neighborhood to be considered plausible. Intuitively, this imposes a hard constraint on the diversity of labels, where only labels in the top ranks are predicted. For example, an input may only contain ``cat" candidate labels within the top ranks, ignoring potentially semantically related animals like ``tiger" and ``dog" in the embedding space, but which fail to reach the cutoff. This prioritizes \emph{size efficiency}, creating compact sets at the expense of potential robustness by excluding semantically related classes.



\subsection{Kernel Contrastive Nonconformity}

 Contrastive learning seeks to structure the embedding space by minimizing the distance between similar (positive) pairs of samples against a background set of dissimilar (negative) ones based only on the input space. The fundamental InfoNCE loss \cite{oord2018representation, sohn2016improved, wu2018unsupervised} was developed to achieve this structure during model training, using augmented variants of the same instance to produce positive pairs, while pushing away other instances. \citet{dwibedi2021little} extended this with Nearest Neighbor Contrastive Learning Representations (NNCLR). Rather than solely relying on augmentations, they build positive pairs using nearest neighbors from a support set, thus utilizing local density information. 

Our density-based conformity score is based on NNCLR. Let $Z_i$ be the embedded representation of our input with label $y_i$ and $Q(Z_i) = \mathcal{N}_{m_{clr}}(Z_i, \tilde{\textbf{Z}})$ be the $m_{clr}$ kernel nearest neighbors forming the support set. For the nearest-neighbor anchor, $A_i = \mathcal{N}_{1}(Z_i, \tilde{\textbf{Z}}\backslash Z_i)$, and embedded neighbor, $Z_j \in Q(Z_i)$, define $d_{\mathcal{K}}(A_i, Z_j) = 2\cdot(1 - K_M(A_i, Z_j))$. For each neighbor $Z_j \in Q(Z_i)$, first calculate pairwise loss:
\begin{equation}\label{eq:clr_loss}
    \mathcal{L}(Z_i, Z_j) = -\log \frac{\exp(-d_{\mathcal{K}}(A_i, Z_j) / \tau)}{\sum_{Z_k \in Q(Z_i)} \exp(-d_{\mathcal{K}}(A_i, Z_k) / \tau)}.
\end{equation}

Here, each support set neighbor is treated as the augmentation of $Z_i$.The denominator is summed only over the neighborhood $Q(Z_i)$ rather than the full reference set as in NNCLR. This results in a relative measure of \textit{local} density using our learned kernel space. Then, to conformalize this self-supervised loss, the labels are used to determine the required similarity between inputs for the desired outcome. The NNCLR-based nonconformity score is defined as the minimum loss among neighbors sharing the same label as $y_i$. This is formalized as
\begin{equation}\label{eq:nnclr_score}
     S_{clr}(Z_i, y_i) = \min \{\mathcal{L}(Z_i, Z_j) \mid Z_j \in Q(Z_i), y_j = y_i\}.
\end{equation}
Define the conformal threshold $q_{clr}$ by taking the $\lceil (1-\alpha_{clr})(1+n) \rceil^{\text{th}}$ smallest calibration score on $\mathcal{D}_{\rm cal}$ for some error rate $\alpha_{clr}$. Thus the CLR conformal prediction set with coverage $1-\alpha_{clr}$ is 
\begin{equation}\label{eq:clr_set}
    \hat{C}_{clr}(Z) = \{y \in \mathcal{Y} \mid S_{clr}(Z, y) \leq q_{clr}\}.
\end{equation}

Following directly from Theorems 2.1 and 2.2 in \citet{lei2018distribution}, the coverage, $\mathbb{P}\left(y_{n+1}\in \hat{C}_{clr}(Z_{n+1}) \right)$, can be bounded below by $1-\alpha_{clr}$ and above by $ 1-\alpha_{clr}+\frac{1}{n+1}$.

The threshold $q_{clr}$ represents the minimum relative similarity a candidate must share with an input compared to other local points. Intuitively, this captures representation grouping structure in the kernel space where semantically similar classes (i.e. cat, dog, tiger) may be predicted together, regardless of class imbalance. Thus, this score raises \emph{robustness} at the expense of size.



\subsection{DANCE}

We define the Doubly Adaptive Neighborhood Conformal Estimation (DANCE) prediction set as:
 
\begin{equation}\label{eq:dance}
  \hat{C}_{dance}(Z) = \hat{C}_{knn}(Z)\cap \hat{C}_{clr}(Z).  
\end{equation}

The combination is motivated by the complementary geometric properties of the two scores. Intuitively, by intersecting the irregular, discrete boundaries of the $k$-NN rank score with the smoother, continuous boundaries of the CLR density score, DANCE retains the semantic groupings captured by the latter while applying the former as a cardinality constraint to prune extraneous classes. Thus, controlling $q_{knn}$ is critical as it explicitly caps the maximum size of the DANCE prediction set. Assume the total error rate is set as $\alpha = \alpha_{knn}+\alpha_{clr}$. We introduce hyperparameter $\lambda$ to control the division such that $\alpha_{clr} = \lambda\cdot \alpha$ and $\alpha_{knn} = (1-\lambda)\cdot\alpha$. This leads to the following proposition for coverage guarantee.

\begin{proposition}\label{thm:dance}
\textit{Under the assumptions of Theorem \ref{thm:knn_coverage}, let $\hat{C}_{knn}(Z_{n+1})$ be the prediction set defined in \eqref{eq:knn_pred_C}. Additionally, let $\hat{C}_{clr}(Z_{n+1})$ be constructed under valid split conformal prediction conditions. For intersection set $\hat{C}_{dance}(Z_{n+1}) = \hat{C}_{knn}(Z_{n+1})\cap \hat{C}_{clr}(Z_{n+1})$, we have:}
\[\P(y_{n+1} \in \hat{C}_{dance}(Z_{n+1}))\geq 1-\alpha_{knn}-\alpha_{clr}.\] 
\end{proposition}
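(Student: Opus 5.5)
The plan is a union bound over the two miscoverage events. The key observation is that $y_{n+1}\notin \hat{C}_{dance}(Z_{n+1})$ holds exactly when $y_{n+1}\notin \hat{C}_{knn}(Z_{n+1})$ or $y_{n+1}\notin \hat{C}_{clr}(Z_{n+1})$. This is De Morgan's law applied to the intersection in \eqref{eq:dance}. We therefore never need to understand the joint distribution of the two scores $S_{knn}$ and $S_{clr}$. This matters because the two scores are strongly dependent: both are built from the same kernel neighborhoods in $\mathcal{D}_{\rm ref}$ and are calibrated on the same $\mathcal{D}_{\rm cal}$.

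First, I will record the marginal guarantees separately. By the lower bound in Theorem \ref{thm:knn_coverage}, under exchangeability of $\mathcal{D}_{\rm cal}\cup\{(Z_{n+1},y_{n+1})\}$ we have $\P(y_{n+1}\notin \hat{C}_{knn}(Z_{n+1}))\le \alpha_{knn}$. For the CLR set, I will invoke the split conformal guarantee of \citet{lei2018distribution} quoted after \eqref{eq:clr_set}, which gives $\P(y_{n+1}\notin \hat{C}_{clr}(Z_{n+1}))\le \alpha_{clr}$.

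The one point to check is that $S_{clr}$ is a legitimate fixed score function. Its definition uses only $\mathcal{D}_{\rm ref}$, the learned $\bM$, and the hyperparameters $\bandwidth,\xi,\tau,m_{clr}$, all of which are independent of $\mathcal{D}_{\rm cal}$ and the test point. Conditional on these, the scores $S_{clr}(Z_i,y_i)$ for $i=1,\dots,n+1$ are exchangeable, so the standard quantile argument applies. This is exactly the ``valid split conformal prediction conditions'' hypothesis in the statement. The same conditioning justifies Theorem \ref{thm:knn_coverage}. If the tie-breaking noise $U$ is used, it is i.i.d.\ and preserves exchangeability.

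Finally, I will combine the two bounds:
\[
\P\big(y_{n+1}\notin \hat{C}_{dance}(Z_{n+1})\big)\le \P\big(y_{n+1}\notin \hat{C}_{knn}(Z_{n+1})\big)+\P\big(y_{n+1}\notin \hat{C}_{clr}(Z_{n+1})\big)\le \alpha_{knn}+\alpha_{clr}.
\]
Taking complements gives the claim.

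The only potential obstacle is the shared calibration set, which might seem to require independence between the two procedures. The union bound sidesteps this entirely, since it holds for arbitrary dependence. I therefore expect no real difficulty beyond stating the conditioning on $\mathcal{D}_{\rm ref}$ explicitly. The upper-bound parts of the marginal results are not needed here.
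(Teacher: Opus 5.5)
Your proposal is correct and matches the paper's proof. The paper applies Boole's inequality to the two miscoverage events, bounding them by $\alpha_{knn}$ via Theorem \ref{thm:knn_coverage} and by $\alpha_{clr}$ via the split conformal guarantee, and your remarks about conditioning on $\mathcal{D}_{\rm ref}$ and about the union bound tolerating arbitrary dependence between the two scores make explicit what the paper leaves implicit.
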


\proof See, Section \ref{sec:dance_proof}

Finally, Algorithm \ref{alg:nnclr_conformal_steps} found in Appendix \ref{app: dance_algo} depicts the construction of $\hat{C}_{dance}(Z)$.

\section{Experiments}

\subsection{Setup}

\subsubsection{Data Description}

DANCE's performance is evaluated across 11 common image classification datasets. For more \emph{generalized} datasets spanning a wide variety of subjects, we test the ImageNet variants of ImageNet \cite{deng2009imagenet}, ImageNet-R \cite{hendrycks2021many}, and ImageNet-Sketch \cite{wang2019learning}. For more \emph{focused} datasets on a single topic, we use EuroSAT \cite{helber2019eurosat}, StanfordCars \cite{krause20133d}, Food101 \cite{bossard2014food}, OxfordPets \cite{parkhi2012cats}, Flowers102 \cite{nilsback2008automated}, Caltech101 \cite{fei2004learning}, DTD \cite{cimpoi2014describing} and UCF101 \cite{soomro2012ucf101}. We only use the available test set of each dataset for this study.

Each dataset is randomly split into 40\% for RFM kernel fitting ($\mathcal{D}_{\text{sup}}$), 40\% for the calibration set ($\mathcal{D}_{\text{cal}}$) and 20\% for the testing set ($\mathcal{D}_{\text{Test}}$). For fitting the RFM model and kernel hyperparameters, the training set is split into 80\% for training and 20\% validation. For test conformal prediction results, each observation in the test set draws neighbors from the entire calibration set. 

Our methodology coverage guarantees apply to fully disjoint reference/calibration splits.  However, to improve data efficiency in finite-sample experiments, we also evaluate a calibration-reuse setting where $\mathcal{D}_{\text{cal}}$ is reused as reference set $\mathcal{D}_{\text{ref}}$. Specifically, when computing calibration nonconformity scores, for each $Z_i \in \mathcal{D}_{\rm cal}$, neighbors are found based via a leave-one-out search where $\mathcal{D}_{\text{ref}} = \mathcal{D}_{\text{cal}}\backslash \{Z_i\}$ to avoid self-matching. Similar reuse strategies have previously been used in neighbor-based methods \cite{papernot2018deep, ghosh2023improving}. Thus, we report empirical results under this reuse setting. Additionally, we empirically corroborate this choice against a theoretically-valid disjoint split in Appendix \ref{app: Disjoint_Ablation}, demonstrating \emph{closely} matching coverage and UQ performance. For fairness, all local CP methods compared reuse the $\mathcal{D}_{\text{cal}}$ as $\mathcal{D}_{\text{ref}}$. 


\begin{table*}[!t]
\centering
\small
\setlength{\tabcolsep}{4pt} 
\renewcommand{\arraystretch}{1.1}
\begin{tabular}{l c cccc c cccc}
\toprule
& & \multicolumn{4}{c}{\textbf{$\alpha = 0.1$}} & & \multicolumn{4}{c}{\textbf{$\alpha = 0.05$}} \\
\cmidrule(lr){3-6} \cmidrule(lr){8-11}
Method & Acc & Size & Cov & Cov Range & CCV & & Size & Cov & Cov Range & CCV \\
\midrule
DANCE (Ours)        & 0.856 & 2.49 & 0.930 & (0.903, 0.964) & 8.442 & & \underline{4.96} & 0.963 & (0.949, 0.986) & 5.364 \\
$k$-NN Set (Ours)      & 0.856 & 2.33 & 0.921 & (0.902, 0.956) & 8.877 & & \textbf{4.77} & 0.958 & (0.949, 0.971) & 5.785 \\
CLR Set (Ours)   & 0.856 & 5.82 & 0.936 & (0.913, 0.974) & \textbf{8.128} & & 8.20 & 0.965 & (0.955, 0.989) & \underline{5.346} \\
\midrule
Deep $k$-NN              & 0.856 & 4.26 & 0.936 & (0.905, 0.987) & 8.799 & & 7.50 & 0.967 & (0.951, 0.998) & \textbf{5.265} \\
RFM Adapter (APS)      & 0.856 & \underline{2.23} & 0.904 & (0.897, 0.917) & 8.817 & & 5.28 & 0.952 & (0.945, 0.957) & 5.948 \\
RFM Adapter (RAPS)     & 0.856 & \textbf{2.21} & 0.904 & (0.895, 0.919) & 8.854 & & 5.29 & 0.952 & (0.943, 0.957) & 5.979 \\
NCP (RAPS)             & 0.856 & 2.48 & 0.935 & (0.895, 0.974) & 8.846 & & 5.81 & 0.958 & (0.936, 0.983) & 5.721 \\
Conf-OT (APS)          & 0.720 & 6.04 & 0.902 & (0.889, 0.925) & \underline{8.436} & & 9.98 & 0.953 & (0.944, 0.964) & 5.846 \\
Conf-OT (RAPS)         & 0.720 & 5.29 & 0.903 & (0.890, 0.925) & 8.503 & & 8.10 & 0.953 & (0.948, 0.962) & 5.803 \\
\bottomrule
\end{tabular}
\caption{CLIP ViT-B/16: Average accuracy and UQ quality metrics (Set Size, Coverage, Coverage Range, Class-conditional Coverage Violation) across 11 datasets. Best and second best set size and CCV are \textbf{bolded} and \underline{underlined}, respectively.}
\label{tab:combined_metrics_VB16}
\end{table*}

\begin{table*}[!t]
\centering
\small
\setlength{\tabcolsep}{4pt} 
\renewcommand{\arraystretch}{1.1}
\begin{tabular}{l c cccc c cccc}
\toprule
& & \multicolumn{4}{c}{\textbf{$\alpha = 0.1$}} & & \multicolumn{4}{c}{\textbf{$\alpha = 0.05$}} \\
\cmidrule(lr){3-6} \cmidrule(lr){8-11}
Method & Acc & Size & Cov & Cov Range & CCV & & Size & Cov & Cov Range & CCV \\
\midrule
DANCE (Ours)      & 0.820 & 4.11 & 0.926 & (0.900, 0.957) & 8.746 & & \textbf{8.07} & 0.962 & (0.946, 0.976) & 5.627 \\
$k$-NN Set (Ours)      & 0.820 & 4.03 & 0.920 & (0.900, 0.961) & 9.006 & & \underline{8.47} & 0.957 & (0.950, 0.973) & 5.963 \\
CLR Set (Ours)   & 0.820 & 7.49 & 0.903 & (0.903, 0.970) & \underline{8.626} & & 9.96 & 0.965 & (0.945, 0.989) & \underline{5.430} \\
\midrule
Deep $k$-NN              & 0.820 & 5.66 & 0.935 & (0.904, 0.989) & 8.855 & & 11.58 & 0.972 & (0.957, 1.000) & \textbf{5.018} \\
RFM Adapter (APS)      & 0.820 & \underline{3.13} & 0.904 & (0.890, 0.915) & 8.761 & & 8.96 & 0.954 & (0.946, 0.963) & 5.982 \\
RFM Adapter (RAPS)     & 0.820 & \textbf{3.12} & 0.903 & (0.890, 0.913) & 8.796 & & 8.98 & 0.954 & (0.946, 0.963) & 6.002 \\
NCP (RAPS)             & 0.820 & 3.38 & 0.926 & (0.888, 0.958) & 8.904 & & 8.85 & 0.960 & (0.937, 0.984) & 5.660 \\
Conf-OT (APS)          & 0.661 & 8.44 & 0.907 & (0.893, 0.928) & \textbf{8.604} & & 13.95 & 0.954 & (0.948, 0.973) & 5.820 \\
Conf-OT (RAPS)         & 0.661 & 7.39 & 0.907 & (0.897, 0.926) & 8.651 & & 11.57 & 0.955 & (0.950, 0.981) & 5.859 \\
\bottomrule
\end{tabular}
\caption{CLIP ResNet-101: Average accuracy and UQ quality metrics (Set Size, Coverage, Coverage Range, Class-conditional Coverage Violation) across 11 datasets. Best and second best set size and CCV are \textbf{bolded} and \underline{underlined}, respectively.}
\label{tab:combined_metrics_RN101}
\end{table*}

\subsubsection{Implementation Details}

We use pre-trained CLIP models with two different backbones: ResNet-101, representing a CNN-based backbone, and ViT-B/16, representing a transformer-based backbone. For DANCE, the pre-trained image encoder is used to embed the images into the features used for kernel fitting and the conformal procedure. The RFM model serves as a task adapter and hyperparameter optimization of the kernel is done through a Bayesian optimization approach for 25 total evaluations, with accuracy as the objective. 

To determine the optimal ratio for $\lambda$, $\mathcal{D}_{\rm cal}$ is first split into 80\% calibration and 20\% validation. This is then used in a grid search over $\lambda \in [0,1]$ with a step size of 0.1. After selection, $\lambda$ is fixed and the final conformal thresholds are computed on the full $\mathcal{D}_{\rm cal}$. For all datasets, the neighborhood sizes are set at $m_{knn} = 100$ and $m_{clr} = 50$ and temperature $\tau = 0.01$. We provide results for $\alpha \in [0.1, 0.05]$. The impact of neighbor choice can be seen in Appendix \ref{app:appendix_ablation}. Details on $\lambda$ selection are given in Appendix \ref{app: lambda_sel}. In addition to DANCE, we also evaluate $k$-NN Set and CLR Set, representing DANCE with $\lambda$ fixed at $0$ and $1$, respectively.

We compare to the commonly-used nonconformity scores of APS \cite{romano2020classification} and RAPS \cite{angelopoulos2020uncertainty} based on perturbed logit rankings. To strictly isolate the contribution of the nonconformity scores of DANCE from the feature learning benefits of the RFM kernel, we utilize the same RFM-adapted backbone in two baseline frameworks. The first, \textbf{RFM Adapter}, represents a non-localized baseline where APS and RAPS are applied directly on the logits predicted by the RFM adapter. The second, \textbf{Neighborhood Conformal Prediction (NCP)} \cite{ghosh2023improving}, represents a similar localized approach where the RFM-predicted logits are re-weighted based on Euclidean distance in a tuned kernel space before being used in RAPS calculations. For NCP, we look at the nearest 50 neighbors as it is most similar to the $clr$ scoring format. 

Next, we also compare APS and RAPS used in \textbf{Conf-OT} \cite{silva2025conformal}, a state-of-the-art zero-shot approach using optimal transport to align the logit space with the calibration class distribution. For Conf-OT, we follow the original setup where CLIP logits are derived from image/text pairs using standard text templates \cite{gao2024clip, zhou2022learning}. We note that Conf-OT is the only method tested that uses the CLIP text encoder to construct class prototypes, where the other methods use only the image encoder. However, due to its zero-shot nature, prediction accuracy is lower as a result. For the RAPS score, hyperparameters $\lambda_{RAPS}$ and $k_{RAPS}$ are respectively set to 0.001 and 1 for the RFM Adapter and Conf-OT methods. In NCP, these are tuned in a grid search.

Finally, we also compare against a 1-layer variant of \textbf{Deep $k$-NN} \cite{papernot2018deep}, representing another method constructing the nonconformity score directly from neighbor information. Their nonconformity score is the proportion of neighbors with the correct label, given a neighborhood size. We again utilize the RFM-adapter and observations are extracted from the layer corresponding to the RFM kernel-transformed embedding space. We look at the 75 kernel nearest neighbors, as used in the original paper.

All experimental trials were conducted on a workstation equipped with an Intel Core i9-14900KF processor, 64 GB of 5200 MHz RAM, and an NVIDIA GeForce RTX 4090 GPU with 24 GB of VRAM. We utilize the open-source implementations provided by the authors for NCP and Conf-OT.

\begin{figure*}[t]
    \centering
    \begin{subfigure}[b]{0.32\textwidth}
        \includegraphics[width=\linewidth]{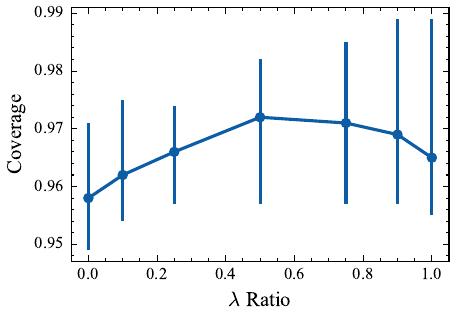}
        \caption{Average Coverage (Min/Max)}
    \end{subfigure}
    \hfill
    \begin{subfigure}[b]{0.32\textwidth}
        \includegraphics[width=\linewidth]{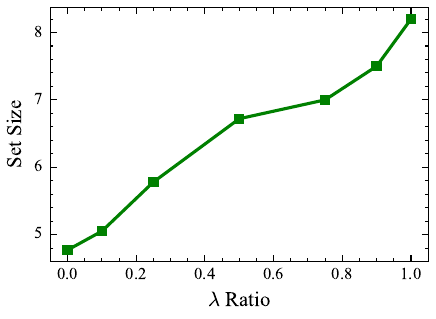}
        \caption{Average Set Size}
    \end{subfigure}
    \hfill
    \begin{subfigure}[b]{0.32\textwidth}
        \includegraphics[width=\linewidth]{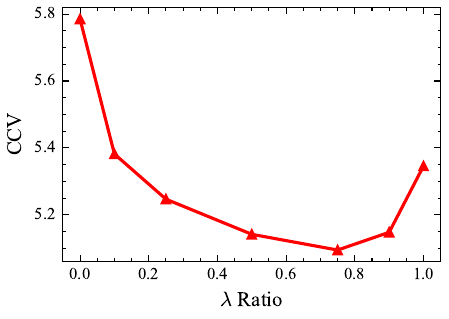}
        \caption{Average CCV}
    \end{subfigure}
    \caption{Ablation study of UQ based on $\lambda$. We observe the mix of $S_{knn}$ and $S_{clr}$ generally provides a trade-off between CCV and set size.}
    \label{fig:lambda_ablation}
\end{figure*}

\subsubsection{Performance Metrics}

The performance of all the methods are evaluated on both classification and UQ quality on the held test set. The Top-1 accuracy (Acc) is used for classification quality. Overall coverage (Cov) and set size (Size) are used as to measure marginal UQ quality, and class-conditional coverage violation (CCV) \cite{ding2023class} is used to measure conditional UQ quality. CCV is defined as 
$$CCV = 100 \times \frac{1}{|\mathcal{Y}|}\sum_{y\in \mathcal{Y}}|\hat{\text{Cov}}_y - (1-\alpha)|,$$
where $|\mathcal{Y}|$ is the total number of classes, and $\hat{\text{Cov}}_y$ is the coverage for class $y\in \mathcal{Y}$. While overall coverage places more weight on highly represented classes, CCV \emph{balances} the influence of large and small classes, thus being a metric reflecting \emph{robustness} to class distribution. We report average accuracy, coverage, set-size and coverage gap, as well as coverage range across all datasets. The results for individual datasets in the $\alpha = 0.05$ setting for CLIP ViT-B/16 are given in Appendix \ref{app: Detailed_ViT}. 

Finally, we note that DANCE is computationally efficient: the full $\lambda$ grid search required an average of 4.0 seconds (range: $0.2\text{s}$--$19.1\text{s}$), with mean total inference time taking an average of $0.08$s (range: $<1\text{ms}$--$0.33\text{s}$).

\subsection{Main Results}

Table \ref{tab:combined_metrics_VB16} shows results using embeddings from the CLIP ViT-B/16 backbone. At the relaxed error rate of $\alpha = 0.1$, RFM Adapter (RAPS) achieves the lowest average set size (2.21), with the APS variant following closely (2.23). CLR Set shows its robustness properties, with the lowest CCV (8.128), albeit with high set size (5.82), predicting over twice as many classes on average compared to more efficient baselines. DANCE achieves third best CCV (8.442), with comparable set size efficiency (2.49) to the top competitors. 

The benefits of DANCE are more pronounced for the more conservative $\alpha = 0.05$. The combination of properties is exhibited here where it achieves nearly the same conditional performance in robustness as CLR Set (5.364 vs 5.346), with the set size efficiency brought by the most efficient $k$-NN Set (4.96 vs 4.77). These account for the second and third best in CCV and the top two in set size. This has lower set size than the closest baseline of RFM Adapter with APS (5.28) and slightly higher CCV than the top Deep $k$-NN (5.265), highlighting DANCE's ability to optimize both efficiency and robustness without excelling in only one.

Table \ref{tab:combined_metrics_RN101} shows results using embeddings from the CLIP ResNet-101 backbone. The lower average accuracy (0.820 vs 0.856) makes uncertainty quantification more challenging, albeit with similar results. At $\alpha = 0.05$, DANCE shines, providing the lowest average set size (8.07) with third lowest CCV (5.627), further validating its capacity for high efficiency and conditional robustness. For both backbones, we note the high robustness of the Conf-OT and relative size efficiency, despite the lower zero-shot prediction accuracy, indicating benefits of using the text embeddings from the multi-modal model. Coverage range reveals undercoverage for some baselines, where individual dataset performance drops to 0.89 for $\alpha = 0.1$ and below 0.94 for $\alpha = 0.05$. In contrast, DANCE attains near-nominal empirical coverage across all experiments under this reuse setting.


\subsubsection{$\lambda$ Sensitivity}
Figure \ref{fig:lambda_ablation} shows the effect of $\lambda$ on the UQ metrics of coverage, set size and CCV for CLIP ViT-B/16 at $\alpha = 0.05$ based on $\lambda \in [0, 0.1, 0.25, 0.5, 0.75, 0.9, 1]$. We see the optimal coverage and CCV generally lies between the $knn$ and $clr$ scores on their own, with spikes in average CCV at both ends and an increasing average set size as $\lambda$ favors $clr$. This indicates the need for the grid search step to find the balance. DANCE also maintains marginal coverage for all $\lambda$ values.

\section{Conclusion}

In this work, we introduce DANCE (Doubly Adaptive Neighborhood Conformal Estimation), a novel framework that provides valid uncertainty quantification for pre-trained vision-language models. By operating within an efficiently learned kernel space rather than fixed pre-trained embeddings, DANCE effectively captures complex non-linear relationships between calibration and test set that is leveraged in set construction while also maximizing utility of the descriptiveness these embeddings provide.

Our key methodological contribution lies in the synergistic blend of two novel neighbor-based nonconformity scores: a rank-based score, $S_{knn}$, that ensures efficiency through a hard size cutoff, and a density-based score, $S_{clr}$, that promotes robustness through returning semantically-related groups. Through extensive experiments on CLIP adaptation, we demonstrate that this dual-score approach, optimized via a learned mixing parameter $\lambda$, consistently achieves a high balance of tight prediction sets and high conditional coverage validity compared to state-of-the-art baselines. 

We note there are limitations of DANCE. First, the efficiency and robustness hinges on the fit of the kernel. If the RFM kernel is a poor fit, the local embedding clustering can be diffuse, causing inefficient set sizes with high cutoff for $q_{knn}$ and the inclusion of overly robust neighborhoods from $q_{clr}$. The final set efficiency and robustness also depends on choice of $\lambda$, $m_{knn}$ and $m_{clr}$, making tuning an essential part. Second, in limited data settings, the reference and calibration sets may need to be reused, losing the strict theoretical coverage guarantees provided. Nevertheless, we see that DANCE maintains empirical coverage validity across all experiments and an ablation showed no significant deviation between this heuristic split and a valid disjoint split. Future work can extend this method to both continuous and generative settings, with small outcome-dependent changes to the nonconformity scores. The performance of the contrastive score may also benefit in a true multi-modal setting, where more embeddings sources are used to calculate semantic similarity. The robust performance of the zero-shot Conf-OT baseline against task-adapted competitors previews the predictive power multi-modal representations can bring.

\section*{Acknowledgements}

We thank Mikhail Belkin, Qi Zhao and Zhanlong Qiu for invaluable discussions on RFM, contrastive learning and kernel nearest neighbors.

\bibliographystyle{icml2025}
\bibliography{ref}

\newpage
\appendix
\onecolumn

\section{DANCE Algorithm} \label{app: dance_algo}

\begin{algorithm}[!h]
\caption{DANCE Conformal Label Sets}
\label{alg:nnclr_conformal_steps}
\begin{algorithmic}[1]
\STATE \textbf{Input:} Reference embeddings $\mathbf{Z}_{ref}$, Calibration embeddings $\mathbf{Z}_{Cal}$, calibration labels $\mathbf{y}_{Cal}$, test embedding $\mathbf{Z}_{n+1}$, learned Kernel $K_M(.)$, total error $\alpha$, error allocation parameter $\lambda$
\STATE \textbf{Output:} Prediction sets $\{\widehat C_i\}_{i=1}^{n_{test}}$
\vspace{0.3em}
\STATE \textbf{Calibration: compute scores and threshold}
\FOR{$i = 1$ \textbf{to} $n_{cal}$}
    \STATE Find all neighbor indices $\mathbf{I}_{Cal}$ from $K_M(\mathbf{Z}_{Cal}, \mathbf{Z}_{ref})$
    \STATE Compute $S_{knn}$ from $Z_{Cal,i}$ to its $m_{knn}$ neighbors indexed by $\mathbf{I}_{Cal, knn}[i,:]$
    \STATE Compute $S_{clr}$ from $Z_{Cal,i}$ to its $m_{clr}$ neighbors indexed by $\mathbf{I}_{Cal, clr}[i,:]$
\ENDFOR

\STATE Calculate $\alpha_{knn}, \alpha_{clr}$ based on $\lambda$ split of $\alpha$
\STATE Find $q_{knn}, q_{clr}$ from $1-(\alpha_{knn},\alpha_{clr})$ quantiles

\vspace{0.3em}
\STATE \textbf{Test: build label set}
\STATE Find all neighbor indices $\mathbf{I}_{n+1}$ from $K_M(\mathbf{Z}_{n+1}, \mathbf{Z}_{ref})$
\STATE Compute $S_{knn}$  from $Z_{n+1}$ to its $m_{knn}$ neighbors indexed by $\mathbf{I}_{n+1, knn}$
\STATE Build $\hat{C}_{knn}(Z_{n+1}) = \hat{\mathcal{Y}}_{q_{knn}}(Z_{n+1})$
\STATE Compute $S_{clr}$ from $Z_{n+1}$ to its $m_{clr}$ neighbors indexed by $\mathbf{I}_{n+1, clr}$
\STATE Build $\hat{C}_{clr}(Z_{n+1})$ from neighbor labels

\STATE \textbf{Return:} $\hat{C}_{dance}(Z_{n+1}) = \hat{C}_{knn}\cap \hat{C}_{clr}$
\end{algorithmic}
\end{algorithm}

\section{Technical details of Recursive Feature Machines (RFMs)}
\label{app: krr}

RFMs are an iterative algorithm for learning features with general machine learning models by learning representations through the Average Gradient Outer Product (AGOP) \citep{radhakrishnan2024mechanism}. RFMs are initialized with a base prediction algorithm, training inputs and labels, and validation inputs and labels. RFMs fit the training data through $T$ iterations of the following steps. RFM (1) first fits the base algorithm to the training data, (2) computes the AGOP of the fit algorithm from step 1, (3) transforms the data with the AGOP matrix, then (4) returns to step 1. The iteration proceeds for $T$ steps, each step constructing a separate predictor. The best predictor among the iterations is selected on the validation set. In our case, as is typically done, we perform the RFM update with Kernel Ridge Regression (KRR) as the base model. We briefly explain the KRR model. 

\paragraph{KRR} Let $X \in \mathbb{R}^{n \times d}$ denote training samples with ${x^{(i)}}^T$ denoting the sample in the $i$\textsuperscript{th} row of $X$ for $i \in [n]$ and $y \in \mathbb{R}^{n \times c}$ denote training labels, where $c$ is the number of output channels (e.g. one-hot encoded classes for $c>2$ classes). In our case, the inputs are themselves embeddings obtained from the internal activations of parent models such as CLIP or ResNet. Let $K: \mathbb{R}^{d} \times \mathbb{R}^{d} \to \mathbb{R}$ denote a kernel function (a positive semi-definite, symmetric function), such as the generalized Laplacian kernel we deploy in DANCE.  Given a ridge regularization parameter $\reg \geq 0$, KRR solved on the data $(X, y)$ gives a predictor, $\hat{f}: \mathbb{R}^{d} \to \mathbb{R}^{c}$, of the form: 
\begin{align}
\label{eq: kernel ridge regression}
    \hat{f}(x) = K(x, X) \beta~,
\end{align}
where $\beta$ is the solution to the following linear system:
\begin{align}
\qquad (K(X, X) + \lambda I) \beta = y~.
\end{align}
Here the notation $K(x, X) \in \mathbb{R}^{1 \times n}$ is the $n$-dimensional row vector with $K(x, X)_{i} = K(x, x^{(i)})$ and $K(X, X) \in \mathbb{R}^{n \times n}$ denotes the kernel matrix of pair-wise kernel evaluations $K(X, X)_{ij} = K(x^{(i)}, x^{(j)})$. The advantage of kernel functions in the context of this work is that the predictor admits a closed form solution, which can be robustly computed and generally has fast training times for datasets under $70$k samples. 

\paragraph{AGOP}
The $M$ matrix learned by RFM is computed as the Average Gradient Outer Product (AGOP) of a KRR model. For a given trained predictor $g: \Real^d \rightarrow \Real^c$ and its training inputs $x_1, \ldots, x_n \in \Real^d$, the AGOP operator produces a matrix $$\mathrm{AGOP}(g, \{x_i\}_{i=1}^n) = \frac{1}{n}\sum_{i=1}^n \nabla g(x_i) \nabla g(x_i)^\top \in \Real^{d \times d}~,$$ where $\nabla g \in \Real^{d \times c}$ is the (transposed) input-output Jacobian of the function $g$. The AGOP matrix of a predictor captures the critical features in input space that most influence the prediction function, provably and empirically improving downstream prediction performance when used as a feature extractor \citep{zhu2025iteratively, beaglehole2025xrfm, radhakrishnan2025linear}. 
\paragraph{RFMs}

We outline the particular RFM algorithm used in our work below.  Given that the procedure is the same for any block, we simplify notation by omitting the block subscripts $\ell$.  For any $x, z \in \mathbb{R}^{k}$, choice of \textit{bandwidth} parameter $L > 0$, and a positive semi-definite, symmetric matrix $M \in \mathbb{R}^{k \times k}$ and $K_M$ Mahalanobis kernel function, let $K_{M}(Z, z) \in \mathbb{R}^{n}$ such that $K_{M}(Z, z)_{i} =  K_{M}(a^{(i)}, z)$ and let $K_{M}(Z, Z) \in \mathbb{R}^{n \times n}$ such that $K_{M}(Z, Z)_{i, j} = K_{M}(a^{(i)}, a^{(j)})$.  Letting $M_0 = I$ and $\lambda \geq 0$ denote a regularization parameter, RFM repeats the following two steps for $T$ iterations:
\begin{align}
    &\text{Step 1:}~~ \hat{f}_t(z) = \beta_t K_{M_t}(Z, z)  ~\text{where}~ \beta_t = y [K_{M_t}(Z, Z) + \lambda I ]^{-1}~, \tag{Kernel ridge regression}\\
    & \text{Step 2:}~~ M_{t+1} = \frac{1}{n}\sum_{i=1}^{n} \nabla_z \hat{f}_t(a^{(i)}) \nabla_z \hat{f}_t(a^{(i)})^\top  \label{eq: agop} \tag{AGOP}~;
\end{align}
where $\nabla_z \hat{f}_t(a^{(i)})$ denotes the gradient of $\hat{f}_t$ with respect to the input $z$ at the point $a^{(i)}$.

We utilize the xRFM repo for the RFM implementation \citep{beaglehole2025xrfm}

\section{Proof of Theorems}

\subsection{Proof of Theorem \ref{thm:knn_coverage}}\label{sec:knn_proof}


\begin{proof}
Recall the definition of set-valued $k$NN classifier: 
\begin{equation*}
\hat{{L}}_k(Z) = {\rm Unique}\{\tilde{y}\in \mathcal{Y}\mid \tilde{y}\in \mathcal{N}_k(Z, \tilde{\mathbf{Z}})\},
\end{equation*}
where the neighborhood is searched over a reference dataset, $\mathcal{D}_{\rm ref}$, that is independent of
$\{\mathcal{D}_{\rm cal},(Z_{n+1},y_{n+1})\}$. Here, we add a bit of a technical tool. Assume the classifier model returns the set of prediction sets, \[f(x) = \{\hat{L}_1(Z),...,\hat{L}_{m_{knn}}(Z)\}\cup\{\hat{L}_\infty(Z)\},\]
where $\hat{L}_\infty(Z) = \{1,..,c\}$, the set of all labels. This function is available in the setting of the theorem, since it assumes the availability of $\hat{L}_k(Z)$ for all $k\in[m_{knn}]\cup\{\infty\}$. Then the nonconformity score function measures the quality of this $f(Z)$ by finding the minimum $k$ that achieves $y\in \hat{L}_k(Z)$. For feature-label pairings $(Z,y)$, we then define the rank-based nonconformity score that is implicitly defined on $f(Z)$ as 
\[ S_{knn}(Z,y) = \min\big\{k\in [m_{knn}]\cup\{\infty\}: y\in \hat{L}_k(Z)\big\},\]
where $\hat{L}_\infty(Z)=\mathcal{Y}$ is the set of all labels. Since the neighborhood search is based on an independent reference dataset, not the calibration dataset, the function $f$ is not associated with the calibration and test datasets, which does not negatively affect the assumed exchangeability.  Then, we define the conformal threshold $q_{knn}$ by taking the $\lceil (1-\alpha_{knn})(1+n) \rceil^{\text{th}}$ smallest score of the calibration scores on $\mathcal{D}_{\rm cal}$ for some error rate $\alpha_{knn}$. By the standard split conformal prediction argument, we have 
\begin{equation}\label{eq:snn}
1-\alpha\leq \P( S_{knn}(Z_{n+1}, y_{n+1})\leq q_{knn})\leq 1-\alpha+\frac{1}{n+1},
\end{equation}
where the upper bound establishes when there is no tie among the nonconformity scores of the calibration dataset and the test data point. 

The core part in the proof is an interesting observation of inclusion monotonicity that for $\tilde{y}\in \mathcal{Y}$, 
\begin{equation}\label{eq:order2}
\tilde{y}\not\in \hat{L}_k(Z) \Leftrightarrow \tilde{y}\not \in \hat{L}_j(Z)~ (\forall j\leq k).
\end{equation}
Therefore, \emph{given $q_{knn}<\infty$,} there is equivalence among events such as 
\[\{S_{knn}(Z_{n+1}, y_{n+1}) > q_{knn} \} =\cap_{k=1}^{q_{knn}} \{ y_{n+1}\not\in \hat{L}_k(Z_{n+1})\}  = \{ Z_{n+1}\not\in \hat{L}_{q_{knn}}(Z_{n+1})\}  ,\]
where the last equality is by \eqref{eq:order2}.
Therefore, we have $\{S_{knn}(Z_{n+1}, y_{n+1}) \leq  q_{knn} \} = \{ y_{n+1}\in \hat{L}_{q_{knn}}(Z_{n+1})\} .$
Recall the definition of the prediction set $ \hat{C}_{knn}(Z) = \hat{L}_{q_{knn}}(Z) $. Therefore,  the bound in \eqref{eq:snn} transfers to 
\begin{equation*}
1-\alpha\leq \P( y_{n+1}\in  \hat{C}_{knn}(Z_{n+1}))\leq 1-\alpha+\frac{1}{n+1}.
\end{equation*}
\end{proof}

\subsection{Proof of Proposition 1:}\label{sec:dance_proof} 


\textbf{Proof:} It follows from Boole's inequality that:
\begin{align*}
  \P(y_{n+1} &\not \in \hat{C}_{knn}(Z_{n+1}) \cup  y_{n+1}\not\in\hat{C}_{clr}(Z_{n+1}))\\ &\leq  \P(y_{n+1}\not \in \hat{C}_{knn}(Z_{n+1}))+\P(y_{n+1}\not \in \hat{C}_{clr}(Z_{n+1}))\leq \alpha_{knn}+\alpha_{clr}   
\end{align*}

where bound $\P(y\not \in \hat{C}_{knn}(Z_{n+1})) \leq \alpha_{knn}$ follows from \ref{thm:knn_coverage} and split conformal bound $\P(y\not \in \hat{C}_{clr}(Z_{n+1})) \leq \alpha_{clr}$.

\clearpage

\section{Disjoint Reference Ablation Study} \label{app: Disjoint_Ablation}

In the main text, we used a leave-one-out strategy where the calibration set was both reused as the reference set and the set used to search for $\lambda$ to maximize data efficiency. In this section, we validate those results against a theoretically valid disjoint split. More specifically, we define the reference set $\mathcal{D}_{\rm ref}$ as the training set $\mathcal{D}_{\rm sup}$ used to fit the RFM adapter. Furthermore, we evenly split the available calibration data, $\mathcal{D}_{\rm cal}$, into two disjoint subsets: $\mathcal{D}_{\lambda}$, used to select the mixing parameter $\lambda$, and $\mathcal{D}_{\rm Thr}$, used to compute the conformal thresholds $(q_{knn}, q_{clr})$.

Uncertainty Quantification (UQ) quality is compared between the reuse and disjoint strategies through set size, marginal coverage and class-conditional coverage violation (CCV). We evaluate with $m_{knn} = 100$, $m_{clr}=50$, $\lambda \in [0,0.1,...,0.9,1.0]$ and $\alpha \in [0.1, 0.05]$ on the ImageNet, ImageNet-Rendition and ImageNet-Sketch datasets.

The results in Table \ref{tab:ablation_disjoint} show that the two splits are very consistent to each other between both $\alpha$ levels in all three datasets. The disjoint split attains slightly lower CCV in all settings, indicating better conditional robustness than the reuse split. The set size efficiency difference for both splits is similar across all settings. While theoretical validity is important, we believe the close results here indicate the heuristic reuse split can be used to maximize data efficiency while maintaining the desired theoretical coverage properties DANCE provides. 

\begin{table}[h]
\centering
\small
\setlength{\tabcolsep}{6pt}
\begin{tabular}{lccccc}
\toprule
Dataset & $\alpha$ & $\mathcal{D}_{\rm ref}$ & Set Size & Coverage & CCV \\
\midrule
\multirow{4}{*}{ImageNet} 
  & \multirow{2}{*}{0.05} & Reuse & 10.75 & 0.953 & 6.450 \\
  & & Disjoint & 10.09 & 0.953 & 6.430 \\
  \cmidrule{2-6}
  & \multirow{2}{*}{0.10} & Reuse & 5.01 & 0.913 & 8.770 \\
  & & Disjoint & 4.55 & 0.914 & 8.760 \\
\midrule
\addlinespace
\multirow{4}{*}{ImageNet-R} 
  & \multirow{2}{*}{0.05} & Reuse & 6.30 & 0.951 & 4.627 \\
  & & Disjoint & 6.15 & 0.955 & 4.525 \\
  \cmidrule{2-6}
  & \multirow{2}{*}{0.10} & Reuse & 3.16 & 0.914 & 6.955 \\
  & & Disjoint & 3.44 & 0.924 & 6.824 \\
\midrule
\addlinespace
\multirow{4}{*}{ImageNet-Sketch} 
  & \multirow{2}{*}{0.05} & Reuse & 17.78 & 0.954 & 6.032 \\
  & & Disjoint & 17.86 & 0.958 & 5.927 \\
  \cmidrule{2-6}
  & \multirow{2}{*}{0.10} & Reuse & 6.42 & 0.903 & 8.625 \\
  & & Disjoint & 8.12 & 0.924 & 8.133 \\
\bottomrule
\end{tabular}
\caption{Comparison of reused (Main Paper) vs. disjoint reference set for UQ quality metrics( Set Size, Coverage and Class-conditional Coverage Violation) for the 3 ImageNet variants. }
\label{tab:ablation_disjoint}
\end{table}

\clearpage

\section{Neighbor Ablation Studies}
\label{app:appendix_ablation}

\begin{figure*}[h!]
    \centering

    \begin{subfigure}[b]{0.32\textwidth}
        \includegraphics[width=\linewidth]{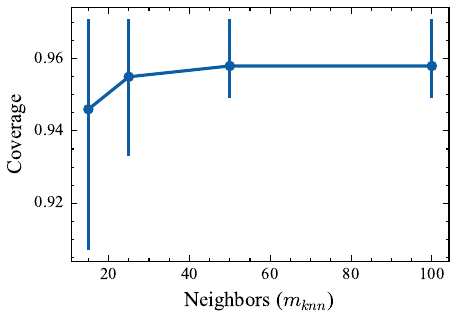}
        \caption{$m_{knn}$: Coverage}
    \end{subfigure}
    \hfill
    \begin{subfigure}[b]{0.32\textwidth}
        \includegraphics[width=\linewidth]{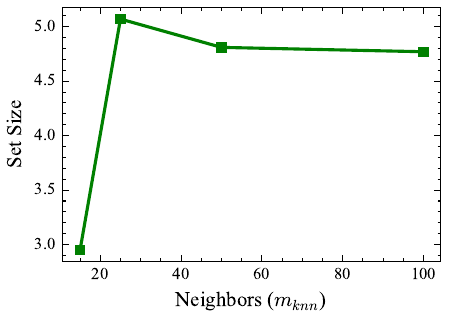}
        \caption{$m_{knn}$: Set Size}
    \end{subfigure}
    \hfill
    \begin{subfigure}[b]{0.32\textwidth}
        \includegraphics[width=\linewidth]{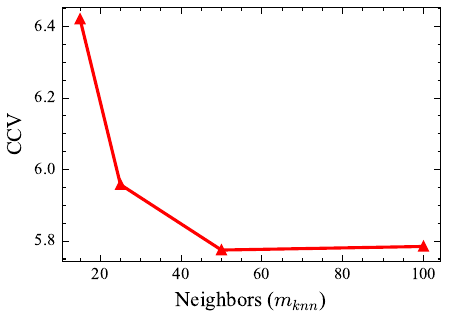}
        \caption{$m_{knn}$: CCV}
    \end{subfigure}
    \vspace{0.3cm}
    
    \begin{subfigure}[b]{0.32\textwidth}
        \includegraphics[width=\linewidth]{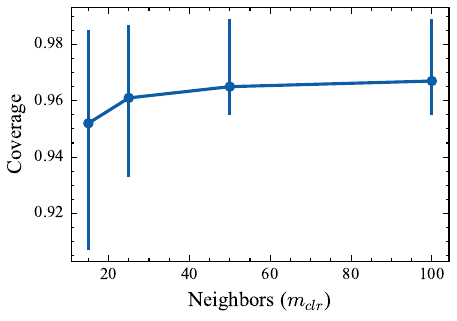}
        \caption{$m_{clr}$: Coverage}
    \end{subfigure}
    \hfill
    \begin{subfigure}[b]{0.32\textwidth}
        \includegraphics[width=\linewidth]{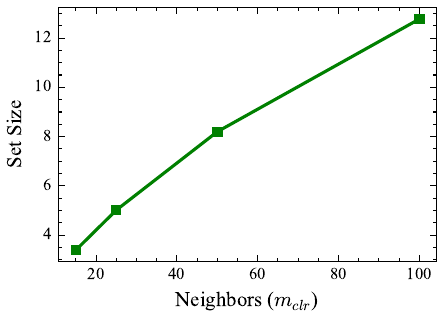}
        \caption{$m_{clr}$: Set Size}
    \end{subfigure}
    \hfill
    \begin{subfigure}[b]{0.32\textwidth}
        \includegraphics[width=\linewidth]{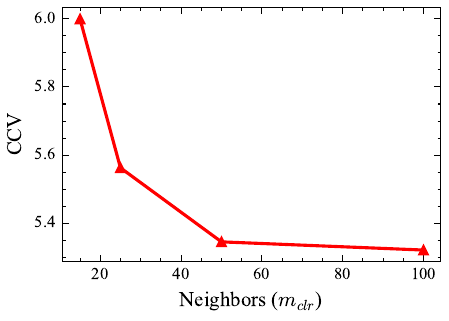}
        \caption{$m_{clr}$: CCV}
    \end{subfigure}
    
    \caption{Additional ablation studies. \textbf{Top Row:} Impact of neighbor number for $m_{knn}$. \textbf{Bottom Row:} Impact of neighbor number for $m_{clr}$}
    \label{fig:appendix_ablations}
\end{figure*}

\subsection{$m_{knn}$ Sensitivity}

We find that performance of $\hat{C}_{knn}(.)$ is generally resistant to choice of $m_{knn}$. We search through $m_{knn} \in [15, 25, 50, 100]$. Figure \ref{fig:appendix_ablations} (top row) shows the effect of $m_{knn}$ on the UQ metrics of coverage, set size and CCV for CLIP ViT-B/16 at $\alpha = 0.05$. As $m_{knn}$ increases, the set size remains steady after around $m_{knn} = 40$ with CCV also remaining steady at just under 5.8. The plots show that for very small neighbor values, the coverage lowers with the minimal marginal coverage drops below the desired 0.95, although this quickly returns to the desired nominal level. This ablation indicates limited benefit of large $m_{knn}$, with it only adding computational burden.

\subsection{$m_{clr}$ Sensitivity}

 Figure \ref{fig:appendix_ablations} (bottom row) shows the effect of $m_{clr}$ on the UQ metrics of coverage, set size and CCV for CLIP ViT-B/16 at $\alpha = 0.05$. We search through $m_{clr} \in [15, 25, 50, 100]$. As for $m_{knn}$, for very small neighbor sizes, the coverage drops below the desired 0.95 level, however this again quickly corrects. From the second plot, we see the choice of $m_{clr}$ highly impacts the resulting set-size where as $m_{clr}$ increases, the set size increases as well. We see the opposite for CCV, where there is a sharp drop in CCV until 50 neighbors. We see that as we go from $m_{clr} = 50$ to $100$, the set size continues to increase; however, the CCV no longer drops, indicating the addition of more neighbors also brings limited benefits.

\clearpage

\section{Selection of $\lambda$} \label{app: lambda_sel}

The hyperparameter $\lambda$ controls the allocation of $\alpha$ between $S_{knn}$ and $S_{clr}$. We determine the optimal $\lambda$ through a grid search over $[0,0.1,...,0.9,1.0]$. First $\mathcal{D}_{\rm cal}$ is partitioned into reference set $\mathcal{D}_{\rm ref}$ (80\%) and tuning set $\mathcal{D}_{\rm val}$ (20\%). For each candidate $\lambda$, we calibrate on $\mathcal{D}_{\rm ref}$ using leave-one-out neighbor search before evaluating prediction sets for $\mathcal{D}_{\rm Test}$. We select the parameter that minimizes a weighted combination of the standardized set size ($Size(\lambda)$), and class-conditional coverage violation ($CCV(\lambda)$) such that  

$$\lambda_{\text{opt}} = \underset{\lambda}{\argmin} (0.8\cdot Size(\lambda) + 0.2\cdot CCV(\lambda) )$$ 

where standardization is computed relative to mean and standard deviation across all candidates. This weighting ensures $\lambda$ is selected prioritizing set size efficiency, while still retaining a level of conditional robustness.

\section{Detailed Results} \label{app: Detailed_Results}

\subsection{RFM and Kernel Parameter Fit Time}

Figure \ref{fig:RFM_Train_Time} shows the computational scalability of the RFM methodology. We notice a linear trend on the $\log$-$\log$ scale, showing a predictable timing quality with respect to data volume. This efficiency is substantial, where the longest fit time for ImageNet required only 713 seconds ($\sim$ 12 minutes), with the majority of datasets requiring under 60 seconds to achieve high predictive performance. We also note that this fit time is for 25 Bayesian Optimization evaluations to fit $\bM$, $\boldsymbol{\beta}$, and determine optimal $\xi, \bandwidth$. This demonstrates that RFM is a computationally efficient solution for task-adaption and kernel fitting.

The BayesianOptimization package \cite{bayesoptim} was used for parameter optimization.

\begin{figure*}[h!]
    \centering
    \includegraphics[width=0.6\linewidth]{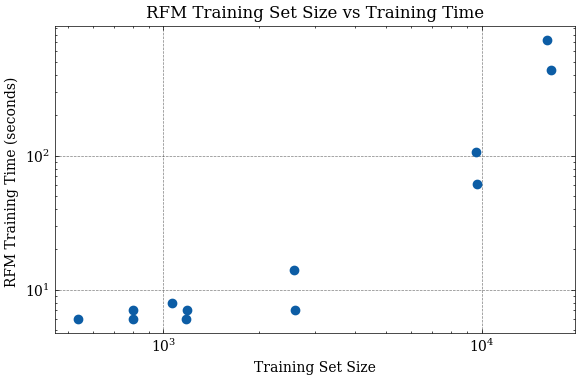}
    \caption{RFM Training Set Size vs Training Time (Both axes are log-scaled.)}
    \label{fig:RFM_Train_Time}
\end{figure*}

\subsection{ViT-B/16 Results} \label{app: Detailed_ViT}

The following show detailed results for CLIP ViT-B/16. Best and second best set size and CCV are \textbf{bolded} and \underline{underlined}, respectively.

\begin{table*}[!h]
\centering
\small
\setlength{\tabcolsep}{4pt}
\renewcommand{\arraystretch}{1}
\resizebox{0.8\textwidth}{!}{%
\begin{tabular}{l r r l r r r r}
\toprule
Dataset & Classes & Test size & Method & Accuracy & Set size & Coverage & CCV \\
\midrule
\multirow{9}{1.2cm}{Imagenet} & \multirow{9}{*}{1000} & \multirow{9}{*}{10000} & DANCE (Ours)       & 0.729 & 10.75 & 0.953 & 6.450 \\
& & & $k$-NN Set (Ours)         & 0.729 & 10.75 & 0.953 & 6.450 \\
& & & CLR Set (Ours)           & 0.729 & 16.18 & 0.957 & \textbf{6.390} \\
& & & Deep $k$-NN         & 0.729 & 11.52 & 0.953 & 6.650 \\
& & & RFM Adapter (APS)   & 0.729 & \textbf{8.04} & 0.951 & 6.500 \\
& & & RFM Adapter (RAPS)  & 0.729 & \underline{8.09} & 0.950 & 6.540 \\
& & & NCP (RAPS)          & 0.729 & 8.88 & 0.950 & \underline{6.400} \\
& & & OT (APS)            & 0.693 & 15.16 & 0.950 & \underline{6.400} \\
& & & OT (RAPS)           & 0.693 & 10.45 & 0.949 & 6.420 \\
\hline
\multirow{9}{1.2cm}{Imagenet-Rendition} & \multirow{9}{*}{200} & \multirow{9}{*}{6000} & DANCE (Ours)            & 0.822 & 6.30 & 0.951 & 4.627 \\
& & & $k$-NN Set (Ours)          & 0.822 & 6.30 & 0.951 & 4.627 \\
& & & CLR Set (Ours)            & 0.822 & 10.29 & 0.958 & 4.181 \\
& & & Deep $k$-NN          & 0.822 & 7.29 & 0.958 & 4.169 \\
& & & RFM Adapter (APS)    & 0.822 & \underline{5.46} & 0.951 & 4.845 \\
& & & RFM Adapter (RAPS)   & 0.822 & 5.61 & 0.952 & 4.783 \\
& & & NCP (RAPS)           & 0.822 & \textbf{3.19} & 0.936 & 5.934 \\
& & & OT (APS)             & 0.795 & 8.68 & 0.953 & \underline{4.117} \\
& & & OT (RAPS)            & 0.795 & 7.23 & 0.953 & \textbf{3.903} \\
\hline
\multirow{9}{1.2cm}{Imagenet-Sketch} & \multirow{9}{*}{1000} & \multirow{9}{*}{10355} & DANCE (Ours)             & 0.718 & \underline{17.78} & 0.954 & 6.032 \\
& & & $k$-NN Set (Ours)         & 0.718 & \textbf{17.09} & 0.953 & 6.077 \\
& & & CLR Set (Ours)           & 0.718 & 19.93 & 0.956 & \underline{5.975} \\
& & & Deep $k$-NN          & 0.718 & 29.02 & 0.968 & \textbf{5.534} \\
& & & RFM Adapter (APS)    & 0.718 & 22.85 & 0.952 & 6.186 \\
& & & RFM Adapter (RAPS)   & 0.718 & 23.00 & 0.952 & 6.175 \\
& & & NCP (RAPS)           & 0.718 & 24.05 & 0.950 & 6.302 \\
& & & OT (APS)             & 0.520 & 43.41 & 0.948 & 6.644 \\
& & & OT (RAPS)            & 0.520 & 29.87 & 0.952 & 6.414 \\
\hline
\multirow{9}{1.2cm}{Eurosat} & \multirow{9}{*}{10} & \multirow{9}{*}{1620} & DANCE (Ours)          & 0.955 & 1.08 & 0.958 & \textbf{2.539} \\
& & & $k$-NN Set (Ours)         & 0.955 & 1.08 & 0.959 & \underline{2.606} \\
& & & CLR Set (Ours)          & 0.955 & 1.22 & 0.956 & 2.872 \\
& & & Deep $k$-NN          & 0.955 & \underline{1.04} & 0.956 & 3.506 \\
& & & RFM Adapter (APS)    & 0.955 & 1.12 & 0.952 & 2.689 \\
& & & RFM Adapter (RAPS)   & 0.955 & 1.12 & 0.952 & 2.689 \\
& & & NCP (RAPS)           & 0.955 & \textbf{1.00} & 0.955 & 2.789 \\
& & & OT (APS)             & 0.559 & 4.63 & 0.952 & 3.078 \\
& & & OT (RAPS)            & 0.559 & 4.58 & 0.951 & 2.956 \\
\hline
\multirow{9}{1.2cm}{Cars} & \multirow{9}{*}{196} & \multirow{9}{*}{1609} & DANCE  (Ours)         & 0.842 & 2.67 & 0.958 & \textbf{5.808} \\
& & & $k$-NN Set(Ours)         & 0.842 & \textbf{2.19} & 0.958 & 6.634 \\
& & & CLR Set (Ours)           & 0.842 & 7.63 & 0.974 & \underline{5.934} \\
& & & Deep $k$-NN          & 0.842 & 3.45 & 0.955 & 6.996 \\
& & & RFM Adapter (APS)    & 0.842 & 2.39 & 0.945 & 7.603 \\
& & & RFM Adapter (RAPS)   & 0.842 & \underline{2.34} & 0.943 & 7.774 \\
& & & NCP (RAPS)           & 0.842 & 3.35 & 0.958 & 6.662 \\
& & & OT (APS)             & 0.692 & 4.39 & 0.958 & 6.662 \\
& & & OT (RAPS)            & 0.692 & 4.22 & 0.958 & 6.675 \\
\hline
\multirow{9}{1.2cm}{Food} & \multirow{9}{*}{101} & \multirow{9}{*}{6060} & DANCE (Ours)         & 0.882 & 1.95 & 0.954 & 2.921 \\
& & & $k$-NN Set (Ours)         & 0.882 & 1.86 & 0.950 & 3.020 \\
& & & CLR Set (Ours)           & 0.882 & 3.17 & 0.955 & 3.102 \\
& & & Deep $k$-NN          & 0.882 & \underline{1.83} & 0.951 & 3.201 \\
& & & RFM Adapter (APS)    & 0.882 & 1.95 & 0.951 & 3.185 \\
& & & RFM Adapter (RAPS)   & 0.882 & 1.98 & 0.952 & 3.267 \\
& & & NCP (RAPS)           & 0.882 & \textbf{1.49} & 0.945 & 3.234 \\
& & & OT (APS)             & 0.853 & 3.18 & 0.953 & \textbf{2.459} \\
& & & OT (RAPS)            & 0.853 & 2.99 & 0.954 & \underline{2.508} \\
\bottomrule
\end{tabular}%
}
\caption{Conformal prediction results (part 1).}
\label{tab:conformal_results_part1}
\end{table*}

\begin{table*}[!h]
\centering
\small
\setlength{\tabcolsep}{4pt}
\renewcommand{\arraystretch}{1}
\resizebox{0.8\textwidth}{!}{%
\begin{tabular}{l r r l r r r r}
\toprule
Dataset & Classes & Test size & Method & Accuracy & Set size & Coverage & CCV \\
\midrule
\multirow{9}{1.2cm}{Pets} & \multirow{9}{*}{37} & \multirow{9}{*}{734} & DANCE (Ours)          & 0.955 & \textbf{1.09} & 0.963 & \textbf{3.514} \\
& & & $k$-NN Set (Ours)        & 0.955 & \textbf{1.09} & 0.966 & \underline{3.784} \\
& & & CLR Set (Ours)          & 0.955 & 2.46 & 0.969 & 3.814 \\
& & & Deep $k$-NN          & 0.955 & 1.37 & 0.970 & 4.219 \\
& & & RFM Adapter (APS)    & 0.955 & 1.20 & 0.950 & 4.369 \\
& & & RFM Adapter (RAPS)   & 0.955 & 1.20 & 0.952 & 4.354 \\
& & & NCP (RAPS)           & 0.955 & \underline{1.11} & 0.966 & 4.069 \\
& & & OT (APS)             & 0.909 & 1.65 & 0.951 & 3.934 \\
& & & OT (RAPS)            & 0.909 & 1.64 & 0.951 & 3.934 \\
\hline
\multirow{9}{1.2cm}{Flowers} & \multirow{9}{*}{102} & \multirow{9}{*}{502} & DANCE (Ours)         & 0.961 & \textbf{1.00} & 0.949 & 9.066 \\
& & & $k$-NN Set (Ours)         & 0.961 & \textbf{1.00} & 0.949 & 9.066 \\
& & & CLR Set (Ours)          & 0.961 & 2.22 & 0.959 & 8.725 \\
& & & Deep $k$-NN          & 0.961 & 7.04 & 0.998 & \textbf{5.147} \\
& & & RFM Adapter (APS)    & 0.961 & 1.24 & 0.957 & 6.938 \\
& & & RFM Adapter (RAPS)   & 0.961 & 1.19 & 0.957 & 6.938 \\
& & & NCP (RAPS)           & 0.961 & \underline{1.18} & 0.985 & \underline{6.019} \\
& & & OT (APS)             & 0.758 & 6.76 & 0.944 & 9.450 \\
& & & OT (RAPS)            & 0.758 & 6.76 & 0.948 & 9.237 \\
\hline
\multirow{9}{1.2cm}{Caltech} & \multirow{9}{*}{101} & \multirow{9}{*}{501} & DANCE (Ours)          & 0.934 & 1.53 & 0.983 & 6.800 \\
& & & $k$-NN Set (Ours)         & 0.934 & \textbf{1.07} & 0.958 & 9.276 \\
& & & CLR Set (Ours)           & 0.934 & 8.71 & 0.989 & \underline{6.100} \\
& & & Deep $k$-NN          & 0.934 & 7.69 & 0.994 & \textbf{5.567} \\
& & & RFM Adapter (APS)    & 0.934 & 1.44 & 0.955 & 9.124 \\
& & & RFM Adapter (RAPS)   & 0.934 & 1.38 & 0.955 & 9.270 \\
& & & NCP (RAPS)           & 0.934 & \underline{1.34} & 0.968 & 8.600 \\
& & & OT (APS)             & 0.925 & 1.82 & 0.964 & 6.978 \\
& & & OT (RAPS)            & 0.925 & 1.74 & 0.962 & 7.078 \\
\hline
\multirow{9}{1.2cm}{DTD} & \multirow{9}{*}{47} & \multirow{9}{*}{338} & DANCE (Ours)           & 0.676 & \textbf{8.99} & 0.965 & \textbf{5.904} \\
& & & $k$-NN Set (Ours)         & 0.676 & \textbf{8.99} & 0.965 & \textbf{5.904} \\
& & & CLR Set (Ours)            & 0.676 & 13.26 & 0.960 & 6.223 \\
& & & Deep $k$-NN          & 0.676 & \underline{10.03} & 0.971 & \underline{6.011} \\
& & & RFM Adapter (APS)    & 0.676 & 11.09 & 0.949 & 7.074 \\
& & & RFM Adapter (RAPS)   & 0.676 & 11.05 & 0.949 & 7.074 \\
& & & NCP (RAPS)           & 0.676 & 17.24 & 0.963 & 6.170 \\
& & & OT (APS)             & 0.471 & 14.12 & 0.955 & 6.968 \\
& & & OT (RAPS)            & 0.471 & 14.02 & 0.955 & 6.968 \\
\hline
\multirow{9}{1.2cm}{UCF} & \multirow{9}{*}{101} & \multirow{9}{*}{744} & DANCE (Ours)           & 0.950 & 1.38 & 0.986 & \textbf{5.342} \\
& & & $k$-NN Set (Ours)        & 0.950 & \textbf{1.08} & 0.971 & 6.195 \\
& & & CLR Set (Ours)            & 0.950 & 5.08 & 0.984 & \underline{5.490} \\
& & & Deep $k$-NN          & 0.950 & 2.22 & 0.960 & 6.914 \\
& & & RFM Adapter (APS)    & 0.950 & 1.31 & 0.956 & 6.917 \\
& & & RFM Adapter (RAPS)   & 0.950 & 1.28 & 0.957 & 6.906 \\
& & & NCP (RAPS)           & 0.950 & \underline{1.19} & 0.967 & 6.682 \\
& & & OT (APS)             & 0.746 & 6.00 & 0.956 & 7.612 \\
& & & OT (RAPS)            & 0.746 & 5.64 & 0.953 & 7.743 \\
\bottomrule
\end{tabular}%
}
\caption{Conformal prediction results (part 2).}
\label{tab:conformal_results_part2}
\end{table*}

\end{document}